\newtheorem{theorem}{Theorem}
\newtheorem{lemma}{Lemma}
\newtheorem{corollary}{Corollary}
\begin{document}

\title{\bf Accelerating RLHF Training with Reward Variance Increase \footnotemark[1]}
\author{Zonglin Yang\footnotemark[2], \quad Zhexuan Gu\footnotemark[3], \quad Houduo Qi\footnotemark[4], \quad Yancheng Yuan\footnotemark[5]}
\date{\today}
\maketitle

\renewcommand{\thefootnote}{\fnsymbol{footnote}}
\footnotetext[1]{{\bf Funding:} The work of Houduo Qi was supported by the Hong Kong Research Grants Council (Project No.
15303124). The work of Yancheng Yuan was supported by the Hong Kong Research Grants Council (Project No.
25305424) and the Research Center for Intelligent Operations Research at The Hong Kong Polytechnic University.}
\footnotetext[2]{Department of Applied Mathematics, The Hong Kong Polytechnic University, Hung Hom, Hong Kong ({\tt zonglin.yang@connect.polyu.hk}).}
\footnotetext[3]{Department of Applied Mathematics, The Hong Kong Polytechnic University, Hung Hom, Hong Kong ({\tt zhexuan.gu@connect.polyu.hk}).}
\footnotetext[4]{Department of Data Science and Artificial Intelligence and Department of Applied Mathematics, The Hong Kong Polytechnic University, Hung Hom, Hong Kong ({\tt houduo.qi@polyu.edu.hk}).}
\footnotetext[5]{Department of Applied Mathematics, The Hong Kong Polytechnic University, Hung Hom, Hong Kong (\textbf{Corresponding author}. {\tt yancheng.yuan@polyu.edu.hk}).}
\renewcommand{\thefootnote}{\arabic{footnote}}

\begin{abstract}
Reinforcement learning from human feedback (RLHF) is an essential technique for ensuring that large language models (LLMs) are aligned with human values and preferences during the post-training phase. As an effective RLHF approach, group relative policy optimization (GRPO) has demonstrated success in many LLM-based applications. However, efficient GRPO-based RLHF training remains a challenge. Recent studies reveal that a higher reward variance of the initial policy model leads to faster RLHF training. Inspired by this finding, we propose a practical reward adjustment model to accelerate RLHF training by provably increasing the reward variance and preserving the relative preferences and reward expectation. Our reward adjustment method inherently poses a nonconvex optimization problem, which is NP-hard to solve in general. To overcome the computational challenges, we design a novel $O(n \log n)$ algorithm to find a global solution of the nonconvex reward adjustment model by explicitly characterizing the extreme points of the feasible set. As an important application, we naturally integrate this reward adjustment model into the GRPO algorithm, leading to a more efficient GRPO with reward variance increase (GRPOVI) algorithm for RLHF training. As an interesting byproduct, we provide an indirect explanation for the empirical effectiveness of GRPO with rule-based reward for RLHF training, as demonstrated in DeepSeek-R1. Experiment results demonstrate that the GRPOVI algorithm can significantly improve the RLHF training efficiency compared to the original GRPO algorithm.
\end{abstract}

\newcommand{\keywords}[1]{%
  \par\vspace{0.5\baselineskip}%
  \noindent{\textbf{Keywords:} #1}
}

% keywords can be removed
\keywords{Large language models, reinforcement learning from human feedback, group relative policy optimization, nonconvex optimization}

\section{Introduction}
In recent years, large language models (LLMs) have achieved unprecedented advancements, leading to the emergence of notable LLM-based products, such as ChatGPT \cite{achiam2023gpt}, Gemini \cite{team2023gemini} and DeepSeek \cite{liu2024deepseek, guo2025deepseek}. Broadly speaking, LLM training consists of two primary phases: pre-training and post-training. The pre-training phase aims to equip language models with general text comprehension and generation capabilities through exposure to large-scale corpora \cite{radford2018improving, brown2020language}. However, despite their potential, pre-trained LLMs may still produce biased, toxic, or otherwise harmful content \cite{bommasani2021opportunities, zheng2023secrets, dai2024bias}. To address these issues, post-training becomes essential, as it further refines the models' abilities for task-specific adaptation \cite{touvron2023llama} and aligns their outputs with human values \cite{christiano2017deep, ziegler2019fine}. 

With the success of ChatGPT and related products, reinforcement learning from human feedback (RLHF) has proven to be an effective post-training approach for aligning LLMs with human values and preferences \cite{ouyang2022training}. Beyond alignment, RLHF also enables models to develop Chain-of-Thought (CoT) reasoning capabilities for addressing complex problems \cite{shao2024deepseekmath}. Typically, the standard RLHF framework comprises three key stages \cite{ziegler2019fine, bai2022training, rafailov2023direct, ji2025overview}: supervised fine-tuning (SFT), reward model training, and reward maximization (via the policy gradient algorithm). During the SFT stage, a pre-trained LLM is fine-tuned over a vast of supervised tasks \cite{razin2025makes, zhao2023survey, ouyang2022training}. After the SFT stage, the LLM needs to be further aligned with human values and preferences, which is critical for model safety, helpfulness, truthfulness, and so on \cite{chujie2024honestllm, liu2024large, tuan2024towards, wu2025navigating}. To achieve this objective, a reward model is trained on a labeled dataset of human feedback with human preferences \cite{busa2014preference, christiano2017deep}. Finally, RLHF employs reinforcement learning methods to fine-tune the LLM for maximizing the reward expectation (usually by the policy gradient algorithm) \cite{zhao2023survey, xu2020sample}. 

Among different reinforcement learning training algorithms in RLHF, the proximal policy optimization (PPO) \cite{schulman2017proximal} is widely adopted in LLM fine-tuning. In this framework, the language model (policy model) is optimized to maximize the expected reward for its generated responses, while being constrained to limit excessive divergence from the reference model. The advantage function in PPO is estimated through generalized advantage estimation (GAE) \cite{schulman2015high}, which requires training an additional value model. This multi-model architecture substantially increases computational costs during training. To mitigate this issue, group relative policy optimization (GRPO) has been proposed, eliminating the need for the additional value model in PPO \cite{shao2024deepseekmath}. GRPO estimates advantages by normalizing rewards across responses generated for the same prompt. This method has been successfully implemented in DeepSeek-R1 \cite{guo2025deepseek}, drawing wide attention in the AI community \cite{liu2025understanding, yang2025qwen3, yu2025dapo}. However, GRPO-based RLHF still suffers from its slow training problem \cite{li2025disco, zhang2025grpo}.

Very recently, Razin et al \cite{razin2025makes} have theoretically demonstrated that the efficiency of RLHF training is highly correlated with the reward variance of the initial policy model (over the response space). Specifically, a higher reward variance leads to faster RLHF training. This directly implies that increasing the reward variance of the initial policy model can contribute to accelerating the GRPO-based RLHF training. Following this finding, Xu et al \cite{xu2025not} apply a down-sampling technique to select a subset of responses for the same prompt with higher reward variance. However, this approach needs to generate many more responses per prompt to find such subsets, requiring much more computational time (generating long responses from an LLM is slow). Indeed, it does not change the reward variance over the whole response space. To the best of our knowledge, a practical algorithm that benefits from the theoretical findings in \cite{razin2025makes} (in other words, increasing the reward variance over the response space) to accelerate (GRPO-based) RLHF training is still absent. 

In this paper, we propose a novel reward adjustment model to provably increase reward variance over the response space, while preserving the reward expectation and preference of the generated responses for the prompt. The proposed model is nonconvex, which is NP-hard to solve in general. We carefully investigate the structure of the extreme points of the constraints in our proposed model, which surprisingly leads to an efficient $O(n\log n)$ algorithm to find a global solution. Consequently, we can naturally integrate this reward adjustment method into the GRPO algorithm, leading to a more efficient GRPO with reward variance increase (GRPOVI) algorithm for RLHF training. Experiment results demonstrate that the GRPOVI algorithm can significantly improve the training
efficiency compared to the original GRPO algorithm. We summarize the key contributions of this paper as follows:
\begin{enumerate}
    \item We propose a novel reward adjustment model to provably increase reward variance over the response space, while preserving the reward expectation and preference of the generated responses for the prompt.
    \item We designed an efficient $O(n\log n)$ algorithm to find a global solution to the nonconvex reward adjustment model by explicitly characterizing the extreme points of the feasible set. As an interesting byproduct, we provide an indirect explanation for the empirical effectiveness of GRPO with rule-based reward for RLHF training as demonstrated in DeepSeek-R1.
    \item We proposed a novel GRPOVI algorithm with marginal per-iteration cost compared to the original GRPO algorithm to accelerate the RLHF training.
    \item Numerical experiment results demonstrate the superior performance of GRPOVI compared to the original GRPO in RLHF training. 
\end{enumerate}

The rest of this paper is organized as follows. Some necessary preliminaries are introduced in \cref{sec:preliminaries}.
We introduce the reward adjustment model with some theoretical guarantees in \cref{sec:rewardadjust}.  An efficient $O(n\log n)$ algorithm to find a global solution to the nonconvex reward adjustment model is also presented in this section. We introduce the GRPOVI algorithm for efficient RLHF training in \cref{sec:GRPOalg}.  Experiment results are shown in \cref{sec:experiments}. We conclude the paper in \cref{sec:conclusions}.

\section{Preliminaries}
\label{sec:preliminaries}

Let $\mathbb{V}$ be a finite vocabulary of tokens, and $\Theta_1, \Theta_2$ be finite-dimensional Euclidean spaces. Given a prompt $\mathbf{x} \in \mathcal{X} \subseteq \mathbb{V}^L$ ($L$ is the maximum number of tokens), the language model $\pi_\theta$ parameterized by $\theta \in \Theta_1$ generates a probability distribution over the response space $\mathcal{Y} \subseteq \mathbb{V}^L$. The likelihood of any response $\mathbf{y} \in \mathcal{Y}$ is given by
\begin{equation}
\label{eq: autoregress}
\pi_\theta(\mathbf{y}|\mathbf{x}) = \prod _{t=1}^{|\mathbf{y}|} \pi_\theta (\mathbf{y}_t|\mathbf{x}, \mathbf{y}_{<t}),
\end{equation}
where $\mathbf{y}_t$ and $\mathbf{y}_{<t}$ denote the $t$-th token and preceding tokens, respectively, and $|\mathbf{y}|$ represents the number of tokens in the response $\mathbf{y}$.

Let $\pi_{\mathrm{ref}}$ be the obtained reference model after SFT, $r$ be the reward model. The language model (policy model) $\pi_\theta$ is aligned to human values and preferences by using policy gradient methods to maximize
\begin{equation}
\label{eq: RLHFobjective}
    \mathcal{L} _ \text{RLHF} (\theta) = \mathbb{E} _ {\mathbf{x}\sim \mathcal{D}} \left[ \mathbb{E} _ {\mathbf{y}\sim \pi_\theta (\cdot|\mathbf{x})} [r(\mathbf{x}, \mathbf{y})] - \lambda D_{\text{KL}}(\pi_\theta(\cdot|\mathbf{x}) || \pi_{\text{ref}}(\cdot|\mathbf{x})) \right],
\end{equation}
where $\mathcal{D} \subset \mathcal{X}$ is a training prompt set \cite{razin2025makes}, and $D_{\mathrm{KL}}(\cdot||\cdot)$ is the KL divergence. Currently, the most popular methods for RLHF are PPO \cite{schulman2017proximal} and GRPO \cite{shao2024deepseekmath}. The objective function in PPO is 
\begin{equation*}
    \mathcal{L}_{\mathrm{PPO}}(\theta) = \mathbb{E}_{\mathbf{x} \sim \mathcal{D}, \mathbf{y} \sim \pi_{\theta_{\mathrm{old}}}(\cdot | \mathbf{x})} \frac{1}{|\mathbf{y}|} \sum_{t=1}^{|\mathbf{y}|} \min \left\{ \frac{\pi_\theta (\mathbf{y}_t | \mathbf{x}, \mathbf{y}_{<t})}{\pi_{\theta_{\mathrm{old}}} (\mathbf{y}_t | \mathbf{x}, \mathbf{y}_{<t})}A_t, \mathrm{clip} \left( \frac{\pi_\theta (\mathbf{y}_t | \mathbf{x}, \mathbf{y}_{<t})}{\pi_{\theta_{\mathrm{old}}} (\mathbf{y}_t | \mathbf{x}, \mathbf{y}_{<t})} , \epsilon \right) A_t \right\} ,
\end{equation*}
where $\pi_\theta$ and $\pi_{\theta_{\mathrm{old}}}$ represent the policy model after update and before update, respectively, $\mathrm{clip}(\cdot, \epsilon) = \min \{ 1+\epsilon, \max \{1-\epsilon,\cdot\} \}$. The advantage $A_t$ is estimated by GAE \cite{schulman2015high}, which is computed based on the reward $r(\mathbf{x},\mathbf{y})$, the likelihoods $\pi_\theta (\mathbf{y}_t | \mathbf{x}, \mathbf{y}_{<t}), \pi_{\theta_{\mathrm{old}}} (\mathbf{y}_t | \mathbf{x}, \mathbf{y}_{<t})$, and a learned value function $V_\phi$, where $\phi \in \Theta_2$ represent the parameters of the value model. The PPO method trains both the policy model and the value model, requiring high memory and computation resources \cite{shao2024deepseekmath}.

To mitigate this issue, GRPO has been proposed to train the policy model without the value model \cite{shao2024deepseekmath, guo2025deepseek}. In GRPO, the advantage is estimated based on the rewards within a response group for the same prompt. The objective of GRPO is
\begin{equation*}
\begin{aligned}
    & \mathcal{L}_{\mathrm{GRPO}}(\theta) 
    = \mathbb{E}_{\mathbf{x} \sim \mathcal{D}, \{\mathbf{y}_i\}_{i=1}^n \sim \pi_{\theta_{\mathrm{old}}}(\cdot | \mathbf{x})} \frac{1}{n} \sum_{i=1}^n \frac{1}{|\mathbf{y}_i|} \sum_{t=1}^{|\mathbf{y}_i|} \\
    & \left( \min \left\{ \frac{\pi_\theta (\mathbf{y}_{i,t} | \mathbf{x}, \mathbf{y}_{i,<t})}{\pi_{\theta_{\mathrm{old}}} (\mathbf{y}_{i,t} | \mathbf{x}, \mathbf{y}_{i,<t})}\hat{A}_{i,t},
    \mathrm{clip} \left( \frac{\pi_\theta (\mathbf{y}_{i,t} | \mathbf{x}, \mathbf{y}_{i,<t})}{\pi_{\theta_{\mathrm{old}}} (\mathbf{y}_{i,t} | \mathbf{x}, \mathbf{y}_{i,<t})} , \epsilon \right) \hat{A}_{i,t} \right\} -\lambda D_{\mathrm{KL}}(\pi_\theta(\cdot|\mathbf{x}) || \pi_{\text{ref}}(\cdot|\mathbf{x})) \right),
\end{aligned}
\end{equation*}
where the response group $\{\mathbf{y}_i\}_{i=1}^n$ is generated for the same prompt $\mathbf{x}$, and the advantage $\hat{A}_{i,t}$ is estimated by normalizing the rewards within the response group
\begin{equation*}
    \hat{A}_{i,t} = \frac{r_i - \mathrm{mean}\{ r(\mathbf{x},\mathbf{y}_1), r(\mathbf{x},\mathbf{y}_2), \cdots, r(\mathbf{x},\mathbf{y}_n) \}}{\mathrm{std}\{ r(\mathbf{x},\mathbf{y}_1), r(\mathbf{x},\mathbf{y}_2), \cdots, r(\mathbf{x},\mathbf{y}_n) \} }.
\end{equation*}
Compared to PPO, GRPO has exhibited comparable performance with less computational cost \cite{shao2024deepseekmath, guo2025deepseek}. However, GRPO still suffers from its slow training in practice \cite{li2025disco, zhang2025grpo}.

Very recently, Razin et al \cite{razin2025makes} theoretically demonstrated that the efficiency in optimizing \eqref{eq: RLHFobjective} based on \eqref{eq: autoregress} is highly correlated to the reward variance of the initial policy model over the response space. Specifically, let $\pi_{\theta(t)}$ be the policy model during the training process, where $\theta(t) \in \Theta_1$ represents the value of the model parameters $\theta$ at time $t$. For each prompt $\mathbf{x} \in \mathcal{X}$ and a target improvement $\gamma > 0$, it follows from \cite[Theorem 4]{razin2025makes} that the minimal time required to achieve a reward expectation increase $\gamma$ is 
\begin{equation*}
    \Omega\left(\mathbb{E}_{\mathbf{x}' \sim \mathcal{D}} \left[ \mathrm{Var} _ {\mathbf{y} \sim \pi_{\theta(0)} (\cdot|\mathbf{x}') } [r(\mathbf{x}',\mathbf{y})] \right]^{-1/3}\right),
\end{equation*}
where the reward variance is 
\[
    \mathrm{Var} _ {\mathbf{y} \sim \pi_{\theta} (\cdot|\mathbf{x}) } [r(\mathbf{x},\mathbf{y})] = \mathbb{E}_{\mathbf{y} \sim \pi_\theta(\cdot|\mathbf{x})} \left[ (r(\mathbf{x} , \mathbf{y}) - \mathbb{E}_{\mathbf{y}' \sim \pi_\theta(\cdot | \mathbf{x})}[r(\mathbf{x} , \mathbf{y}')])^2\right].
\]
Notably, this theoretical result applies to any general bounded reward model $r$. It explicitly shows that higher reward variance of the initial policy model over the response space leads to faster RLHF training. In their paper, Razin et al \cite{razin2025makes} have also experimentally verified that a lower reward variance leads to slower GRPO-based RLHF training. However, for a given reward model at hand, how to benefit from the theoretical findings to accelerate the RLHF training is still unknown.

\section{A Reward Adjustment Model}
\label{sec:rewardadjust}
In this section, we will introduce a novel reward adjustment model that can be naturally incorporated into the GRPO algorithm to accelerate the RLHF training for LLMs. 

\subsection{Optimization Formulation}
\label{subsec:optform}

Assume the original reward model is $r:\mathcal{X}\times \mathcal{Y}\to [m,M]$ for some $-\infty < m < M < \infty$. For any prompt $\mathbf{x} \in \mathcal{D}$ in the RLHF training dataset,  let $\mathbf{y}_1,\mathbf{y}_2\cdots,\mathbf{y}_n \in \mathcal{Y}$ be a group of responses for the prompt $\mathbf{x} \in \mathcal{D}$. Our goal is to adjust the original rewards $\{r(\mathbf{x}, \mathbf{y}_i)\}_{i=1}^n$ to $\{\tilde{r}(\mathbf{x}, \mathbf{y}_i)\}_{i=1}^n$ to increase reward variance, while preserving the reward expectation and the relative preferences between $\mathbf{y}_i$ and $\mathbf{y}_j$ for all $i$ and $j$.  

For simplicity, we denote $r_i = r(\mathbf{x},\mathbf{y}_i)$ for $1 \leq i \leq n$. Without loss of generality, suppose the responses $\mathbf{y}_1,\mathbf{y}_2, \cdots,\mathbf{y}_n \in \mathcal{Y}$ are ordered such that $r_1 \ge r_2 \ge \cdots \ge r_n$. Otherwise, we can sort $\mathbf{y}_i$ according to $r_i$ in $O(n\log n)$ computational cost. We propose the following reward adjustment model
\begin{equation}
\label{eq: mainproblem}
    \begin{aligned}
        \max \limits_ {\mathbf{z} \in \mathbb{R}^n} & \quad f(\mathbf{z}) := \sum_{i=1}^n p_i z_i^2  \\
        \mathrm{s. t. } & \quad m \le z_i \le M \quad \forall 1 \le i \le n, \\
        & \quad \sum_{i=1}^n p_i z_i = \sum_{i=1}^n p_i r_i, \\
        & \quad z_i \ge z_{i+1} \quad \forall 1 \le i \le n-1,
    \end{aligned}
\end{equation}
where $p_i = \pi_\theta (\mathbf{y}_i | \mathbf{x})>0$. Denote the feasible set of \eqref{eq: mainproblem} as
\[
\mathcal{P} := \left\{\mathbf{z} \in \mathbb{R}^n ~|~ \sum_{i=1}^n p_i z_i = \sum_{i = 1}^n p_i r_i, ~ m \leq z_i \leq M ~ \forall 1 \leq i \leq n, ~ z_{i} \geq z_{i + 1} ~\forall 1 \leq i \leq n - 1 \right\}.
\]
The reward adjustment model \eqref{eq: mainproblem} is intuitive and self-explanatory. Specifically, the objective function measures the reward variance over the responses, and the second and third constraints are proposed to preserve the reward expectation and relative preference over the responses. After obtaining a solution $\mathbf{z}^*$ to the problem \eqref{eq: mainproblem}, we can assign $\tilde{r}(\mathbf{x}, \mathbf{y}_i) = z^*_i$ for $1 \leq i \leq n$.

Realizing that the reward adjustment model \eqref{eq: mainproblem} maximizes a convex function over a convex set, which is known to be NP-hard to solve in general \cite{zwart1974global}. To address the computational challenges for solving \eqref{eq: mainproblem}, we will introduce an efficient algorithm with $O(n)$ computational cost to find a global solution to \eqref{eq: mainproblem} in \cref{subsec:Onalg}, which is based on an explicit characterization of the extreme points of $\mathcal{P}$, shown in \cref{subsec:extremepoint}. Before we introduce the efficient algorithm for solving \eqref{eq: mainproblem}, we first show that the desirable adjusted rewards with higher reward variance over the response space can be obtained by solving \eqref{eq: mainproblem} over a group of generated responses. 

\subsection{A Guarantee for Reward Variance Increase}
\label{subsec:VIguarantee}
The theoretical guarantee for the increase of the reward variance over the response space is established by the following theorem.

\begin{theorem}
\label{thm:varinc}
    Let $\mathbf{x} \in \mathcal{D}$ be a given prompt and $\{ \mathbf{y}_1, \mathbf{y}_2, \cdots, \mathbf{y}_n \} \subset \mathcal{Y}$ be some responses. If the reward $(\tilde{r}(\mathbf{x},\mathbf{y}_1), \tilde{r}(\mathbf{x},\mathbf{y}_2), \cdots, \tilde{r}(\mathbf{x},\mathbf{y}_n) )$ is a global optimal solution of problem \eqref{eq: mainproblem}, with $r_i = r(\mathbf{x}, \mathbf{y}_i)$ and $ p_i = \pi_{\theta}(\mathbf{y}_i | \mathbf{x}) ~ \forall i \in \{1,2,\cdots,n\}$, then the reward variance of policy model $\pi_{\theta}(\cdot | \mathbf{x})$ over the response space can be increased for the prompt $\mathbf{x} \in \mathcal{D}$.
\end{theorem}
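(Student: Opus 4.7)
The plan is to show that the original reward vector $\mathbf{r} = (r_1, r_2, \ldots, r_n)$ is itself feasible for problem \eqref{eq: mainproblem}, so that global optimality of $\mathbf{z}^\ast = (\tilde{r}(\mathbf{x},\mathbf{y}_1), \ldots, \tilde{r}(\mathbf{x},\mathbf{y}_n))$ immediately yields $f(\mathbf{z}^\ast) \geq f(\mathbf{r})$; combining this with the equality constraint will then give a non-decrease in the reward variance over the entire response space. First I would verify feasibility of $\mathbf{r}$: the box constraints hold because $r$ takes values in $[m,M]$ by assumption, the monotonicity constraint $z_i \ge z_{i+1}$ holds by the ordering convention $r_1 \ge r_2 \ge \cdots \ge r_n$, and the equality constraint $\sum_i p_i z_i = \sum_i p_i r_i$ is trivial. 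Hence $\mathbf{r} \in \mathcal{P}$.

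Next I would exploit the optimality of $\mathbf{z}^\ast$. Since $\mathbf{r}$ is feasible, $f(\mathbf{z}^\ast) = \sum_{i=1}^n p_i (z_i^\ast)^2 \ge \sum_{i=1}^n p_i r_i^2 = f(\mathbf{r})$, and from the equality constraint $\sum_{i=1}^n p_i z_i^\ast = \sum_{i=1}^n p_i r_i$. I would then define a new reward $\tilde{r}(\mathbf{x},\cdot)$ that takes the value $z_i^\ast$ on $\mathbf{y}_i$ for $1 \le i \le n$ and agrees with $r(\mathbf{x}, \cdot)$ on every other response, so that only the contributions of $\mathbf{y}_1, \ldots, \mathbf{y}_n$ change when computing the moments under $\pi_\theta(\cdot \mid \mathbf{x})$.

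Then I would decompose the expectation and the second moment over $\mathcal{Y}$ into the piece coming from $\{\mathbf{y}_1,\ldots,\mathbf{y}_n\}$ and the piece coming from its complement. The equality constraint shows the expectation is unchanged:
\[
\mathbb{E}_{\mathbf{y} \sim \pi_\theta(\cdot|\mathbf{x})}[\tilde{r}(\mathbf{x},\mathbf{y})] - \mathbb{E}_{\mathbf{y} \sim \pi_\theta(\cdot|\mathbf{x})}[r(\mathbf{x},\mathbf{y})] = \sum_{i=1}^n p_i (z_i^\ast - r_i) = 0,
\]
while the inequality $f(\mathbf{z}^\ast) \geq f(\mathbf{r})$ shows the second moment is non-decreasing:
\[
\mathbb{E}_{\mathbf{y} \sim \pi_\theta(\cdot|\mathbf{x})}[\tilde{r}(\mathbf{x},\mathbf{y})^2] - \mathbb{E}_{\mathbf{y} \sim \pi_\theta(\cdot|\mathbf{x})}[r(\mathbf{x},\mathbf{y})^2] = f(\mathbf{z}^\ast) - f(\mathbf{r}) \ge 0.
\]
Using $\mathrm{Var} = \mathbb{E}[(\cdot)^2] - (\mathbb{E}[\cdot])^2$, these two facts together give $\mathrm{Var}_{\mathbf{y} \sim \pi_\theta(\cdot|\mathbf{x})}[\tilde{r}(\mathbf{x},\mathbf{y})] \ge \mathrm{Var}_{\mathbf{y} \sim \pi_\theta(\cdot|\mathbf{x})}[r(\mathbf{x},\mathbf{y})]$, as claimed.

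There is no real technical obstacle here: the whole argument rests on noticing that $\mathbf{r}$ is itself admissible for \eqref{eq: mainproblem}. The only point that requires mild care is phrasing the conclusion in terms of the variance over the full response space $\mathcal{Y}$ rather than just over the sampled group $\{\mathbf{y}_1, \ldots, \mathbf{y}_n\}$; this is handled cleanly by defining $\tilde{r}$ to coincide with $r$ outside the sample, so that the unchanged responses contribute identically to both moments and cancel in the differences above.
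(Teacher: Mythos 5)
Your proposal is correct and follows essentially the same route as the paper: extend the adjusted reward to all of $\mathcal{Y}$ by keeping $r$ outside the sampled group, use the equality constraint to show the expectation is preserved, and use global optimality (against the feasible point $\mathbf{r}$) to show the second moment does not decrease, hence the variance does not decrease. Your explicit verification that $\mathbf{r}$ itself lies in $\mathcal{P}$ is a small but welcome addition that the paper leaves implicit when invoking optimality.
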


\begin{proof}
    For simplicity, denote the response group for prompt $\mathbf{x}$ as $\mathcal{G} := \{ \mathbf{y}_1, \mathbf{y}_2, \cdots, \mathbf{y}_n \}$. The reward adjustment model yields an adjusted reward, transforming $r(\mathbf{x}, \mathbf{y})$ into
    \[
    r' (\mathbf{x}, \mathbf{y}) := \left\{
    \begin{aligned}
        & \tilde{r}(\mathbf{x}, \mathbf{y}) \quad \forall \mathbf{y} \in \mathcal{G}, \\
        & r(\mathbf{x}, \mathbf{y}) \quad \forall \mathbf{y} \in \mathcal{Y} \setminus \mathcal{G}.
    \end{aligned}
    \right.
    \]
    Since the vocabulary $\mathbb{V}$ is finite, the expectation of adjusted reward $r'$ of policy model $\pi_{\theta}$ is 
    \[
    \begin{aligned}
        \mathbb{E}_{\mathbf{y} \sim \pi_\theta(\cdot| \mathbf{x})} [r'(\mathbf{x}, \mathbf{y})] & = \sum_{\mathbf{y} \in \mathcal{G}} \pi_\theta(\mathbf{y} | \mathbf{x}) r'(\mathbf{x}, \mathbf{y}) + \sum_{\mathbf{y} \in \mathcal{Y} \setminus \mathcal{G}} \pi_\theta(\mathbf{y} | \mathbf{x}) r'(\mathbf{x}, \mathbf{y}) \\
        & = \sum_{\mathbf{y} \in \mathcal{G}} \pi_\theta(\mathbf{y} | \mathbf{x}) \tilde{r}(\mathbf{x}, \mathbf{y}) + \sum_{\mathbf{y} \in \mathcal{Y} \setminus \mathcal{G}} \pi_\theta(\mathbf{y} | \mathbf{x}) r(\mathbf{x}, \mathbf{y}) \\
        & = \sum_{\mathbf{y} \in \mathcal{G}} \pi_\theta(\mathbf{y} | \mathbf{x}) r(\mathbf{x}, \mathbf{y}) + \sum_{\mathbf{y} \in \mathcal{Y} \setminus \mathcal{G}} \pi_\theta(\mathbf{y} | \mathbf{x}) r(\mathbf{x}, \mathbf{y}) \\ 
        & = \mathbb{E}_{\mathbf{y} \sim \pi_\theta(\cdot| \mathbf{x})} [r(\mathbf{x}, \mathbf{y})].
    \end{aligned}
    \]
    The third equation holds because the adjusted reward $\tilde{r}(\mathbf{x}, \mathbf{y})$ satisfies the second constraint in \eqref{eq: mainproblem}, which preserves the reward expectation. This implies the reward adjustment model does not change the reward expectation over the response space. Therefore, 

     \[
    \begin{aligned}
        & \mathrm{Var}_{\mathbf{y} \sim \pi_\theta(\cdot|\mathbf{x})} [r'(\mathbf{x}, \mathbf{y})] - \mathrm{Var}_{\mathbf{y} \sim \pi_\theta(\cdot|\mathbf{x})} [r(\mathbf{x}, \mathbf{y})] \\
        = & \mathbb{E}_{\mathbf{y} \sim \pi_\theta(\cdot|\mathbf{x})}[(r'(\mathbf{x}, \mathbf{y}))^2] - \mathbb{E}_{\mathbf{y} \sim \pi_\theta(\cdot|\mathbf{x})}[(r(\mathbf{x}, \mathbf{y}))^2] \\
        = & \sum_{\mathbf{y} \in \mathcal{G}} \pi_\theta(\mathbf{y} | \mathbf{x}) (r'(\mathbf{x}, \mathbf{y}))^2 + \sum_{\mathbf{y} \in \mathcal{Y} \setminus \mathcal{G}} \pi_\theta(\mathbf{y} | \mathbf{x}) (r'(\mathbf{x}, \mathbf{y}))^2 \\
        & - \left(\sum_{\mathbf{y} \in \mathcal{G}} \pi_\theta(\mathbf{y} | \mathbf{x}) (r(\mathbf{x}, \mathbf{y}))^2 + \sum_{\mathbf{y} \in \mathcal{Y} \setminus \mathcal{G}} \pi_\theta(\mathbf{y} | \mathbf{x}) (r(\mathbf{x}, \mathbf{y}))^2 \right) \\
        = & \sum_{\mathbf{y} \in \mathcal{G}} \pi_\theta(\mathbf{y} | \mathbf{x}) (\tilde{r}(\mathbf{x}, \mathbf{y}))^2 - \sum_{\mathbf{y} \in \mathcal{G}} \pi_\theta(\mathbf{y} | \mathbf{x}) (r(\mathbf{x}, \mathbf{y}))^2 \ge 0.
    \end{aligned}
    \]
    The last inequality holds since $(\tilde{r}(\mathbf{x}, \mathbf{y}_1),\cdots, \tilde{r}(\mathbf{x}, \mathbf{y}_n))$ is the global optimal solution of problem \eqref{eq: mainproblem}. That completes the proof.
\end{proof}

\subsection{An Explicit Characterization of Extreme Points of $\mathcal{P}$}
\label{subsec:extremepoint}

Note that the feasible set $\mathcal{P}$ of \eqref{eq: mainproblem} is a convex polyhedral set, which is bounded and nonempty. Therefore, the number of vertices (extreme points) of $\mathcal{P}$ is nonempty and finite, denoted as $\mathcal{V} := \{\mathbf{v}_1, \dots, \mathbf{v}_K\}\subset \mathbb{R}^n$ for some $K > 1$. Moreover, by the representation theorem \cite{grunbaum1967convex}, the polyhedron $\mathcal{P}$ is the convex hull of its extreme point set $\mathcal{V}$. Since the objective function of \eqref{eq: mainproblem} is convex, it is straightforward to show that the maximum of \eqref{eq: mainproblem} is attained at some extreme point of $\mathcal{P}$. For completeness, we include the proof below. 

\begin{lemma}
\label{thm: optimal-extreme}
Define a subset of extreme point set $\mathcal{V}$ as
\[
\mathcal{V}^* := \{\mathbf{v} \in \mathcal{V} ~|~ f(\mathbf{v}) = \max_{\mathbf{v}' \in \mathcal{V}} f(\mathbf{v}')\},
\]
then the point(s) in $\mathcal{V}^*$ are global solution(s) to the nonconvex optimization problem \eqref{eq: mainproblem}.
\end{lemma}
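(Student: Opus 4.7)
The plan is to invoke the standard principle that a convex function attains its maximum over a compact convex polyhedron at an extreme point, using the representation theorem cited in the paragraph preceding the lemma. First I would note that $\mathcal{P}$ is a nonempty, bounded polyhedron (the bounds $m \le z_i \le M$ make it compact, and the feasibility of $\mathbf{r} = (r_1,\ldots,r_n)$ guarantees nonemptiness). Since $f$ is continuous and $\mathcal{P}$ is compact, the maximum $f^\star := \max_{\mathbf{z} \in \mathcal{P}} f(\mathbf{z})$ exists and is finite.

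Next I would use the representation theorem to write any $\mathbf{z} \in \mathcal{P}$ as a convex combination of extreme points,
\[
\mathbf{z} = \sum_{k=1}^K \lambda_k \mathbf{v}_k, \qquad \lambda_k \ge 0, \quad \sum_{k=1}^K \lambda_k = 1.
\]
Applying Jensen's inequality to the convex function $f(\mathbf{z}) = \sum_{i=1}^n p_i z_i^2$ (which is convex since each $p_i > 0$) yields
\[
f(\mathbf{z}) \;\le\; \sum_{k=1}^K \lambda_k f(\mathbf{v}_k) \;\le\; \max_{1 \le k \le K} f(\mathbf{v}_k) \;=\; \max_{\mathbf{v}' \in \mathcal{V}} f(\mathbf{v}').
\]
In particular, the right-hand side is an upper bound on $f^\star$. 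Conversely, since $\mathcal{V} \subset \mathcal{P}$, we have $\max_{\mathbf{v}' \in \mathcal{V}} f(\mathbf{v}') \le f^\star$, so the two quantities are equal. Therefore any $\mathbf{v} \in \mathcal{V}^\star$ attains $f(\mathbf{v}) = f^\star$ and is a global solution to \eqref{eq: mainproblem}.

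There is essentially no main obstacle here, since every ingredient (compactness of $\mathcal{P}$, convexity of $f$, the representation theorem) is either already cited in the preceding paragraph or immediate from the problem data. The only small point to verify carefully is that the sum-of-squares objective is genuinely convex on $\mathbb{R}^n$, which follows because its Hessian $2\,\mathrm{diag}(p_1,\ldots,p_n)$ is positive definite under the assumption $p_i > 0$ from \eqref{eq: mainproblem}.
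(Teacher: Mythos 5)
Your proof is correct and follows essentially the same route as the paper: write an arbitrary $\mathbf{z} \in \mathcal{P}$ as a convex combination of the extreme points and apply Jensen's inequality to the convex objective to bound $f(\mathbf{z})$ by $\max_{\mathbf{v}' \in \mathcal{V}} f(\mathbf{v}')$. The extra details you supply (compactness, nonemptiness, and the reverse inequality via $\mathcal{V} \subset \mathcal{P}$) are sound and only make the argument slightly more explicit than the paper's version.
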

\begin{proof}
For any $\mathbf{z} \in \mathcal{P}$, since $\mathcal{P} = \mbox{conv}(\mathcal{V})$, there exist $0 \leq \lambda_i \leq 1$ satisfying $\lambda_1 + \cdots +\lambda_K =1$ such that 
\[
\mathbf{z} = \lambda_1 \mathbf{v}_1 + \cdots + \lambda_K \mathbf{v}_K. 
\]
Therefore, 
\begin{align*}
f(\mathbf{z}) \leq \sum_{i=1}^K \lambda_i f(\mathbf{v}_i) \leq \max_{1 \leq i \leq K} f(\mathbf{v}_i).
\end{align*}
This completes the proof. 
\end{proof}

The above lemma implies that we can find a global solution to \eqref{eq: mainproblem} by finding the maximum over the set of extreme points $\mathcal{V}$. This relies on an explicit characterization of $\mathcal{V}$, which is shown in the following lemma. 
% {\color{red}
\begin{lemma}
\label{lemma: charact-extreme-points}
The set of extreme points $\mathcal{V}$ of $\mathcal{P}$ is  
\begin{equation}
\label{eq:extreme_point_set}
\mathcal{V} = \left\{\mathbf{v} = (v_1,\cdots, v_n) \in \mathcal{P} \;\middle|\; 
\begin{aligned}
\exists 0 \leq k \leq l \leq n ~ \mathrm{ and } ~ m < \alpha < M ~ \mathrm{ s.t. }\\
v_1 = \cdots = v_k = M,\\
v_{k+1} = \cdots = v_l = \alpha,\\
v_{l+1} = \cdots = v_n = m,\\
\alpha = (\sum_{i=1}^n p_ir_i - M \sum_{i = 1}^k p_i - m \sum_{i = l+1}^n p_i)/(\sum_{i = k+1}^l p_i).
\end{aligned}
\right\}
\end{equation}
\end{lemma}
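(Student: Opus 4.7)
My plan is to prove the characterization \eqref{eq:extreme_point_set} by two inclusions. For the easier direction, I will show that every $v$ of the stated form is an extreme point via the midpoint test: assuming $v = \tfrac{1}{2}(v' + v'')$ with $v', v'' \in \mathcal{P}$, I will deduce $v' = v'' = v$ component by component. For the converse, I will characterize extreme points via the lineality of the cone of feasible directions at $v$, i.e., the set of $d \in \mathbb{R}^n$ with $v \pm \epsilon d \in \mathcal{P}$ for all sufficiently small $\epsilon > 0$, and show that its triviality forces the block structure.

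For the first inclusion, fix $v$ as in \eqref{eq:extreme_point_set} and suppose $v = \tfrac{1}{2}(v' + v'')$ with $v', v'' \in \mathcal{P}$. On indices $i \le k$, the upper bounds $v'_i, v''_i \le M$ combined with the averaging condition force $v'_i = v''_i = M$; for $i > l$, the lower bounds analogously yield $v'_i = v''_i = m$. For the middle indices $k < i \le l$, set $a_i := v'_i - \alpha$. The monotonicity constraints on $v'$ and on $v'' = 2v - v'$ across this block yield both $a_i \ge a_{i+1}$ and $a_i \le a_{i+1}$, so $a_i$ is constant on the middle block; substituting into $\sum_i p_i v'_i = \sum_i p_i v_i$ reduces to $\sum_{i=k+1}^{l} p_i a_i = 0$, forcing this constant to be $0$ since all $p_i > 0$. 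Hence $v' = v'' = v$.

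For the converse, let $v \in \mathcal{P}$ be an extreme point, and partition its coordinates into $b$ maximal consecutive blocks of equal values with strictly decreasing levels $\beta_1 > \cdots > \beta_b$. Any direction $d$ with $v \pm \epsilon d \in \mathcal{P}$ must be constant on each block (from the binding within-block monotonicity $z_i = z_{i+1}$ holding in both $\pm d$ directions), must vanish on any block at level $M$ or $m$ (from the active bound constraints acting in both $\pm d$ directions), and must satisfy $\sum_i p_i d_i = 0$. Since the $\beta_j$'s are strictly decreasing, at most one block is at level $M$ (necessarily the first) and at most one is at level $m$ (necessarily the last), leaving the block-constants on the remaining ``free'' blocks, i.e., those with level in $(m, M)$, as the only free parameters, subject to a single linear equality. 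If more than one free block exists, a nonzero $d$ can be constructed, contradicting extremality. Hence at most one free block remains, forcing the structure $(M, \ldots, M, \alpha, \ldots, \alpha, m, \ldots, m)$, and the equality constraint then pins $\alpha$ to the displayed formula.

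The main obstacle is the converse: correctly counting the dimension of the lineality cone by coupling within-block monotonicity (block-constancy of $d$) with active bounds (pinning $d$ to zero on $M$- and $m$-blocks), and cleanly handling the boundary cases $k = 0$, $l = n$, and the degenerate empty-middle case $k = l$. In that degenerate case, the displayed formula for $\alpha$ is vacuous and the vertex $(M, \ldots, M, m, \ldots, m)$ is included automatically through the feasibility assumption $v \in \mathcal{P}$.
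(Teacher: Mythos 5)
Your proposal is correct and takes essentially the same approach as the paper: the ``form implies extreme'' direction is the identical midpoint argument (the paper's contradiction $\alpha > \alpha$ is just your $a_i \ge a_{i+1}$ and $a_i \le a_{i+1}$ squeeze), and your feasible-direction count for the converse builds exactly the perturbation the paper writes out explicitly as the pair $\widehat{\mathbf{v}},\widetilde{\mathbf{v}}$, with the two interior sub-blocks moved oppositely by $p$-weighted amounts to preserve the equality constraint. The only difference is presentational: you package the converse as a lineality/degrees-of-freedom argument at $\mathbf{v}$ rather than as an explicit midpoint decomposition.
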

% }
\begin{proof}
Define $\mathcal{U} := $ the right hind side of \eqref{eq:extreme_point_set}. We prove $\mathcal{V} = \mathcal{U}$.

Let $\mathbf{v} \in \mathcal{V}$ be any extreme point of $\mathcal{P}$, which is also a basic feasible solution to $\mathcal{P}$. Therefore, there exist $n$ linearly independent active constraints for $\mathbf{v}$. 

From the definition of $\mathcal{P}$, without loss of generality, we assume there exists $0 \leq k \leq l \leq n$, such that 
\[
v_1 = \cdots = v_k = M, \quad v_{l+1} = \cdots = v_n = m. 
\]
\noindent\textbf{Claim}: there exists $m < \alpha < M$ such that 
\begin{equation*}
\label{claim: alpha-part}
v_{k+1} = \cdots = v_l = \alpha.
\end{equation*}

When $l \leq k + 1$, the claim holds automatically. For the case $n \geq l > k+1$, we prove the claim by contradiction. 

Assume that there exist $k+1 \leq j_1 < j_2 \leq l$ and $m < \alpha_2 < \alpha_1 < M$ such that 
\begin{equation*}
\label{eq: contradiction-alpha-part}
M > v_{k+1} = \cdots = v_{j_1} = \alpha_1 > \alpha_2 = v_{j_1 + 1} = \cdots = v_{j_2} > v_{j_2 + 1} \geq m. 
\end{equation*}
Define 
\[
\bar{\delta} := \frac{1}{2}\min\{(M - \alpha_1), (\alpha_1 - \alpha_2), (\alpha_2 - v_{j_2 + 1})\}.
\]
Choose $\delta > 0$ small enough such that 
\[
\delta_1 := \frac{\sum_{i = k+1}^{j_1}p_i}{\sum_{i = j_1+1}^{j_2} p_i}\delta < \bar{\delta}, \quad \delta < \bar{\delta}.
\]
Therefore, we can construct $\widehat{\mathbf{v}},\widetilde{\mathbf{v}}$ as
\[
\widehat{v}_i = \left\{
\begin{array}{ll}
 v_i   &  \mbox{ if } i \leq k \mbox{ or } i > j_2,\\
 \alpha_1 + \delta  & \mbox{ if } k + 1 \leq i \leq j_1,\\
 \alpha_2 - \delta_1  & \mbox{ if } j_1 + 1 \leq i \leq j_2,\\
\end{array}
\right. \quad 
\widetilde{v}_i = \left\{
\begin{array}{ll}
 v_i   &  \mbox{ if } i \leq k \mbox{ or } i > j_2,\\
 \alpha_1 - \delta  & \mbox{ if } k + 1 \leq i \leq j_1,\\
 \alpha_2 + \delta_1  & \mbox{ if } j_1 + 1 \leq i \leq j_2.\\
\end{array}
\right.
\]
It is easy to check that $\widehat{\mathbf{v}},\widetilde{\mathbf{v}} \in \mathcal{P}$. This contradicts the fact that $\mathbf{v}$ is an extreme point of $\mathcal{P}$, since
\[
\widehat{\mathbf{v}} \neq \widetilde{\mathbf{v}} \quad \mbox{and} \quad \mathbf{v} = \frac{1}{2}(\widehat{\mathbf{v}} + \widetilde{\mathbf{v}}).
\]
Thus, the components of $\mathbf{v}$ indexed from $k+1$ to $l$ have one single value $\alpha$ at most. Therefore, we prove the claim. This means $\mathcal{V} \subset \mathcal{U}$.

For each $\mathbf{v} = (v_1,v_2,\cdots,v_n) \in \mathcal{U}$, we prove $\mathbf{v}$ is an extreme point of $\mathcal{P}$ by contradiction. Assume $\mathbf{v}$ is not an extreme point, then there exist $\widetilde{\mathbf{v}} = (\tilde{v}_1, \tilde{v}_2, \cdots, \tilde{v}_n) \in \mathcal{P}, \widehat{\mathbf{v}} = (\hat{v}_1, \hat{v}_2, \cdots, \hat{v}_n) \in \mathcal{P}, \widetilde{\mathbf{v}} \ne \mathbf{v},$ and $ \widehat{\mathbf{v}} \ne \mathbf{v}$ such that 
\begin{equation*}
    \mathbf{v} = \frac{1}{2}(\widetilde{\mathbf{v}} + \widehat{\mathbf{v}}).
\end{equation*}

Since $v_1=v_2 = \cdots = v_k = M,$ we have
\begin{equation*}
    \frac{1}{2} (\tilde{v}_i + \hat{v}_i) = M \quad \forall 1 \le i \le k.
\end{equation*}
It implies that $\tilde{v}_1 = \tilde{v}_2 = \cdots = \tilde{v}_k = M$ and $\hat{v}_1 = \hat{v}_2 = \cdots = \hat{v}_k = M$ as $\tilde{v}_i \le M$ and $\hat{v}_i \le M ~ \forall 1 \le i \le n$. Similarly, we have $\tilde{v}_{l+1} = \tilde{v}_{l+2} = \cdots = \tilde{v}_n = m$ and $\hat{v}_{l+1} = \hat{v}_{l+2} = \cdots = \hat{v}_n = m$.

For each $k+1 \le i \le l$, we have 
\begin{equation}
\label{eq:midpoint}
    \frac{1}{2} (\tilde{v}_i + \hat{v}_i) = \alpha.
\end{equation}
\noindent\textbf{Claim}: $\tilde{v}_{k+1} = \tilde{v}_{k+2} = \cdots = \tilde{v}_l$.

We prove this claim by contradiction. Assume there exists $k+1 \le j \le l-1$ such that $\tilde{v}_j > \tilde{v}_{j+1}$. Then we have 
\begin{equation*}
    \alpha = \frac{1}{2} (\tilde{v}_j + \hat{v}_j) > \frac{1}{2}(\tilde{v}_{j+1} + \hat{v}_{j+1}) = \alpha.
\end{equation*}
This leads to a contradiction. So we have $\tilde{v}_{k+1} = \tilde{v}_{k+2} = \cdots = \tilde{v}_l = \alpha$ as $\widetilde{\mathbf{v}}$ satisfies the equality of \eqref{eq: mainproblem}. Combined with \eqref{eq:midpoint}, we have $\hat{v}_{k+1} = \hat{v}_{k+2} = \cdots = \hat{v}_l = \alpha$. This means $\widetilde{\mathbf{v}} = \widehat{\mathbf{v}} = \mathbf{v}$, which is contradict to the assumption. This implies $\mathbf{v} \in \mathcal{V}$.

Therefore, we have $\mathcal{V} = \mathcal{U}$ and complete the proof of the lemma. 

\end{proof}

As an immediate consequence of \cref{lemma: charact-extreme-points}, we can design an $O(n^2)$ extreme point search algorithm (see Algorithm \ref{alg:extreme-points-search}), which globally solves the nonconvex optimization problem \eqref{eq: mainproblem}. However, an $O(n^2)$ algorithm is slow when the number of responses $n$ is large. In the next subsection, we propose an $O(n)$ algorithm (assuming the rewards and probabilities are sorted) for finding a global solution to \eqref{eq: mainproblem}.

\begin{algorithm}[!h]
\caption{Enumeration search algorithm for \eqref{eq: mainproblem}}
	\label{alg:extreme-points-search}
	\begin{algorithmic}[1]
		\STATE \textbf{Input}: $p = (p_1, p_2, \cdots, p_n)$ with $0 < p_i < 1$. A sorted reward $r = (r_1,r_2,\cdots,r_n)$. An upper bound $M$ and a lower bound $m$.
		\STATE \textbf{Output}: A optimal solution $\mathbf{z}^*$ to \eqref{eq: mainproblem} and the optimal objective value $f^*$.
        \STATE\textbf{Initialization}: $c = \sum_{i = 1}^n p_i r_i$, $\mathbf{z}^* = r$, $f^* = 0$, $k^* = 0$, $l^* = 0$, $\alpha^* = 0$.
        \FOR{$k = 0, 1, \dots, n$}
        \FOR{$l = k, k+1, \dots, n$}
        \STATE Compute $\alpha$ based on \cref{lemma: charact-extreme-points}.
        \STATE Compute current objective value $\bar{f}$ based on $c, M, m, p$.
        \IF{$\bar{f} > f^*$}
        \STATE Set $f^* = \bar{f}$, $k^* = k$, $l^* = l$, $\alpha^* = \alpha$.
        \ENDIF
        \ENDFOR
        \ENDFOR
        \STATE Set
        $z^*_1 = \cdots = z^*_{k^*} = M, z^*_{k^*+1} = \cdots = z^*_{l^*} = \alpha^*, z^*_{l^*+1} = \cdots = z^*_n = m$. 
	   \STATE \textbf{return} $\mathbf{z}^* = (z^*_1,z^*_2, \cdots, z^*_n)$ and $f^*$
	\end{algorithmic}
\end{algorithm}

\subsection{An $O(n)$ Search Algorithm for Solving \eqref{eq: mainproblem}}
\label{subsec:Onalg}

We first prove several key lemmas demonstrating that a one-pass search suffices to obtain the global optimal solution. Without loss of generality, we assume $0 \leq k < l - 1 \leq n$ and define 
\[
S_A = \sum_{i = 1}^k p_i, \quad S_C = \sum_{i = l+1}^n p_i, \quad S_B = 1 - S_A - S_C, \quad \alpha = \frac{c - M S_A - m S_C}{S_B}.
\]

\begin{lemma}
    \label{lemma:alphamonotone}
    Some basic inequalities hold
    \begin{enumerate}
        \item $\frac{c - MS_A - mS_C}{S_B} > \frac{c - M(S_A+p_{k+1}) - mS_C}{S_B - p_{k+1}}$ \quad if $M >\alpha$; 
        \item $\frac{c - MS_A - mS_C}{S_B} < \frac{c - M(S_A+p_{k+1}) - mS_C}{S_B - p_{k+1}}$ \quad if $M <\alpha$;
        \item $\frac{c - MS_A - mS_C}{S_B} < \frac{c - MS_A - m(S_C+p_l)}{S_B - p_l}$ \quad if $m < \alpha$; 
        \item $\frac{c - MS_A - mS_C}{S_B} > \frac{c - MS_A - m(S_C+p_l)}{S_B - p_l}$ \quad if $m > \alpha$.
    \end{enumerate}
\end{lemma}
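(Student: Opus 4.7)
The plan is to prove all four inequalities by direct cross-multiplication, since each reduces to a one-line algebraic check once denominators are cleared. I would begin by introducing the shorthand $a := c - M S_A - m S_C$ and $b := S_B$, so that by definition $\alpha = a/b$. Under the standing assumption $0 \le k < l - 1 \le n$, the middle block contains at least two indices, which gives $b > 0$, $b - p_{k+1} > 0$, and $b - p_l > 0$. Thus every denominator appearing in the four statements is strictly positive, and cross-multiplication preserves the direction of each inequality.

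For item 1, I would multiply out $\frac{a}{b} > \frac{a - M p_{k+1}}{b - p_{k+1}}$ to obtain $a(b - p_{k+1}) > b(a - M p_{k+1})$, which after cancellation of $ab$ on both sides becomes $-a p_{k+1} > -b M p_{k+1}$. Since $p_{k+1} > 0$, this is equivalent to $a < bM$, i.e., $\alpha < M$, matching the hypothesis. Item 2 is the identical calculation with both inequalities reversed, corresponding to $a > bM$, i.e., $\alpha > M$. For items 3 and 4, the analogous manipulation on $\frac{a}{b}$ versus $\frac{a - m p_l}{b - p_l}$ collapses to comparing $a$ with $bm$, i.e., comparing $\alpha$ with $m$; since $p_l > 0$, the direction of the resulting inequality in $a$ vs $bm$ is flipped relative to the original, accounting for the stated hypotheses.

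I do not foresee any genuine obstacle. The only things to monitor carefully are (i) that $p_{k+1}, p_l > 0$ and $b, b - p_{k+1}, b - p_l$ are all positive, so that no sign flips are missed when cross-multiplying or dividing through, and (ii) that the cancellation of $ab$ is legitimate (it is, since we are just subtracting equal terms). Because all four claims share the same structure, I would present the computation in full for item 1 and then indicate that items 2--4 follow by the same algebra with the appropriate sign swaps, keeping the proof short and uniform.
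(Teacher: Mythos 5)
Your proposal is correct and follows essentially the same route as the paper: both arguments clear the (strictly positive, since $k < l-1$) denominators and reduce each inequality to the sign of $M - \alpha$ or $m - \alpha$; the paper writes the difference over a common denominator to get $p_{k+1}(M-\alpha)/(S_B - p_{k+1})$ and $p_l(m-\alpha)/(S_B - p_l)$, which is the same computation as your cross-multiplication. Your explicit attention to the positivity of $S_B$, $S_B - p_{k+1}$, and $S_B - p_l$ is exactly the point that makes the sign bookkeeping valid.
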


\begin{proof}
    Since $k < l - 1$, we know that $S_B - p_{k+1} > 0$ and $S_B - p_l > 0$. For the first two inequalities, 
    \[
    \begin{aligned}
        & \frac{c - MS_A - mS_C}{S_B} - \frac{c - M(S_A+p_{k+1}) - mS_C}{S_B - p_{k+1}} \\
        = & \frac{(c - MS_A - mS_C)(S_B - p_{k+1}) - [c - M(S_A+p_{k+1}) - mS_C]S_B}{S_B (S_B - p_{k+1})} \\
        = & \frac{p_{k+1}(MS_B + MS_A + mS_C - c)}{S_B (S_B - p_{k+1})} \\
        = & \frac{p_{k+1}(M - \alpha) }{ S_B - p_{k+1}}.
    \end{aligned}
    \]
    For the last two inequalities.
    \[
    \begin{aligned}
        & \frac{c - MS_A - mS_C}{S_B} - \frac{c - MS_A - m(S_C+p_l)}{S_B - p_l} \\
        = & \frac{(c - MS_A - mS_C)(S_B - p_l) - [c - MS_A - m(S_C+p_l))]S_B}{S_B(S_B - p_l)} \\
        = & \frac{p_l (MS_A + mS_B + mS_C - c)}{S_B(S_B - p_l)} \\
        = & \frac{p_l (m - \alpha)}{S_B - p_l}.
    \end{aligned}
    \]
    This completes the proof.
\end{proof}

\begin{lemma}
\label{lemma:infeasible}
Assume $m < \alpha < M$, we have
\begin{enumerate}
    \item if $\frac{c - M(S_A + p_{k+1}) - mS_C}{S_B - p_{k+1}} < m$, then $v_{k+1} < M$ for all $\mathbf{v} \in \mathcal{V}$;
    \item if $\frac{c - MS_A - m(S_C + p_l)}{S_B - p_{l}} > M$, then $v_{l} > m$ for all $\mathbf{v} \in \mathcal{V}$.
\end{enumerate}
\end{lemma}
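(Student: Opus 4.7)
The plan is to convert each infeasibility hypothesis into a quantitative bound on a cumulative probability mass, and then show that the structural constraints inherited from Lemma~\ref{lemma: charact-extreme-points} force any hypothetical ``bad'' extreme point to violate that bound. First, by clearing denominators and using $S_A + S_B + S_C = 1$, I would verify that the hypothesis of part~1 is equivalent to
\[
\frac{c - m}{M - m} \;<\; S_A + p_{k+1},
\]
and symmetrically that the hypothesis of part~2 is equivalent to $(M - c)/(M - m) < S_C + p_l$. These are routine algebraic rearrangements and are not the main content of the proof; they simply recast the hypotheses as ``too much mass must lie on the $M$ side (resp.\ $m$ side).''

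Next, I would derive a complementary universal bound satisfied by every extreme point. Let $\mathbf{v} \in \mathcal{V}$ have the structure given by Lemma~\ref{lemma: charact-extreme-points} with parameters $(k', l', \alpha')$, and write $S_{A'}, S_{B'}, S_{C'}$ for the corresponding probability sums. When the middle plateau is nonempty ($k' < l'$), the requirement $\alpha' > m$ combined with the expectation constraint $MS_{A'} + \alpha' S_{B'} + m S_{C'} = c$ and $S_{A'} + S_{B'} + S_{C'} = 1$ yields $S_{A'} < (c-m)/(M-m)$; dually, $\alpha' < M$ yields $S_{C'} < (M-c)/(M-m)$. When the middle plateau is empty ($k' = l'$), the expectation constraint reduces to $MS_{A'} + mS_{C'} = c$ with $S_{A'} + S_{C'} = 1$, forcing the equalities $S_{A'} = (c-m)/(M-m)$ and $S_{C'} = (M-c)/(M-m)$. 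Either way, $S_{A'} \leq (c-m)/(M-m)$ and $S_{C'} \leq (M-c)/(M-m)$ hold on all of $\mathcal{V}$.

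Finally, I would close the argument by contradiction. For part~1, if some $\mathbf{v} \in \mathcal{V}$ satisfied $v_{k+1} = M$, then by Lemma~\ref{lemma: charact-extreme-points} its leading $M$-block would have to contain index $k+1$, so $k' \geq k + 1$ and hence $S_{A'} \geq S_A + p_{k+1} > (c-m)/(M-m) \geq S_{A'}$, a contradiction. Part~2 is entirely symmetric: $v_l = m$ would force $l' \leq l - 1$, giving $S_{C'} \geq S_C + p_l > (M-c)/(M-m) \geq S_{C'}$.

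The main obstacle I anticipate is the boundary case in which the middle plateau of a candidate extreme point is empty, since then $\alpha'$ is not defined and one cannot invoke the strict inequalities $m < \alpha' < M$ to bound $S_{A'}$ and $S_{C'}$. In that degenerate case the required bounds must be read off directly as equalities from the two-level expectation constraint, and the argument still goes through only because each hypothesis supplies a \emph{strict} inequality on the relevant side, which remains incompatible with equality. Beyond this subtlety, the proof reduces to careful bookkeeping of the three probability masses $S_A, S_B, S_C$.
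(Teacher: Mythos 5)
Your proof is correct and follows essentially the same route as the paper: negate the conclusion and derive a contradiction with the expectation constraint $\sum_i p_i v_i = c$ using the ordering and box constraints. The paper's version is slightly more direct---it lower-bounds $c$ by $M(S_A+p_{k+1}) + m(S_B - p_{k+1} + S_C)$ using only $v_i \ge m$ and $v_1 = \cdots = v_{k+1} = M$, without invoking the extreme-point plateau structure, so the normalization to $(c-m)/(M-m)$ and the case analysis on an empty middle plateau (your anticipated obstacle) never arise.
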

\begin{proof}
Assume that 
\[
\frac{c - M(S_A + p_{k+1}) - mS_C}{S_B - p_{k+1}} < m.
\]
This is, 
\[
c < M(S_A + p_{k+1}) + mS_C + m(S_B - p_{k+1}).
\]
\noindent\textbf{Claim}: This implies that $v_{k+1} < M$ for all $\mathbf{v} \in \mathcal{V}$.

We prove this claim by contradiction. If $v_{k+1} = M$ for all $\mathbf{v} \in \mathcal{V}$. Then by the equality constraint,
\[
    c \ge M(S_A + p_{k+1}) + m (S_B - p_{k+1} + S_C) > c.
\]
This leads to a contradiction. Thus, the first part of the lemma holds.

Now, assume that 
\[
\frac{c - MS_A - m(S_C + p_l)}{S_B - p_{l}} > M.
\]
This is, 
\[
c > MS_A + M(S_B - p_l) + m(S_C + p_l). 
\]
Similarly, this implies that $v_{l} > m$ for all $\mathbf{v} \in \mathcal{V}$. We have completed the proof. 
\end{proof}

\begin{lemma}
\label{lemma:objmonotone}
Assume $m < \alpha < M$, by the monotonicity proved in \cref{lemma:alphamonotone}, we know that 
\begin{enumerate}
    \item $\frac{c - M(S_A + p_{k+1}) - mS_C}{S_B - p_{k+1}} < M$;
    \item $\frac{c - MS_A - m(S_C + p_l)}{S_B - p_{l}} > m$.
\end{enumerate}
Moreover, 
\begin{itemize}
    \item[(a)] if 
    \[
    \frac{c - M(S_A + p_{k+1}) - mS_C}{S_B - p_{k+1}} > m,
    \]
    then it can strictly increase the objective function value by setting $k = k+1$ and $l = l$;
    \item[(b)] if 
    \[
    \frac{c - MS_A - m(S_C + p_l)}{S_B - p_{l}} <M,
    \]
     then it can strictly increase the objective function value by setting $k = k$ and $l = l - 1$.
\end{itemize}
\end{lemma}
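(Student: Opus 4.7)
The plan is to handle the claims in two stages. Items 1 and 2 are direct corollaries of \cref{lemma:alphamonotone}: for item 1, the hypothesis $\alpha < M$ triggers the first inequality of \cref{lemma:alphamonotone}, which says $\alpha > (c - M(S_A + p_{k+1}) - m S_C)/(S_B - p_{k+1})$, placing this quotient strictly below $M$; item 2 is the symmetric argument, using $\alpha > m$ together with the third inequality of \cref{lemma:alphamonotone} to bound the $l$-perturbed quotient strictly above $m$.

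The substantive work lies in (a) and (b). For (a), I would express the objective value at the generic extreme point $(k, l, \alpha)$ as $f = M^2 S_A + \alpha^2 S_B + m^2 S_C$, and likewise at the candidate neighbor $(k+1, l, \alpha')$, where $\alpha' := (c - M(S_A + p_{k+1}) - m S_C)/(S_B - p_{k+1})$. Subtracting the two expectation-preservation equations yields the key linear identity $\alpha'(S_B - p_{k+1}) = \alpha S_B - M p_{k+1}$. Using it to substitute for $(\alpha')^2(S_B - p_{k+1})$ and for $\alpha^2 S_B$ and then regrouping, I expect the difference to collapse to the tidy factorization
\[
f' - f \;=\; p_{k+1}(M - \alpha)(M - \alpha').
\]
Strict positivity is then immediate: the first factor is positive by hypothesis and the second by item 1. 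The side condition $\alpha' > m$ appearing in (a) is not what drives the strict increase; it is precisely what guarantees that $(k+1, l, \alpha')$ is itself a valid extreme point in the sense of \cref{lemma: charact-extreme-points}, so the move is admissible in the search.

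Item (b) then follows by a fully symmetric computation with the roles of $(M, S_A, k)$ and $(m, S_C, l)$ swapped. Writing $\alpha'' := (c - M S_A - m(S_C + p_l))/(S_B - p_l)$ and using the analogous linear identity $\alpha''(S_B - p_l) = \alpha S_B - m p_l$, the parallel manipulation should produce $f'' - f = p_l(\alpha - m)(\alpha'' - m)$, strictly positive because $\alpha > m$ is given and $\alpha'' > m$ has already been established in item 2; the hypothesis $\alpha'' < M$ ensures $(k, l-1, \alpha'')$ is a valid extreme point. The only nontrivial step throughout is the algebraic simplification that produces the two-factor form; once that factorization is visible, everything else is sign-chasing. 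Conceptually, the identity encodes the familiar fact that redistributing probability mass from an interior value toward an endpoint of the support, while preserving the mean, strictly increases the second moment.
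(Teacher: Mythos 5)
Your proposal is correct and follows essentially the same route as the paper: both parts (a) and (b) are proved by directly computing the objective difference between the two consecutive extreme points and exhibiting it as a manifestly positive quantity, with items 1 and 2 read off from \cref{lemma:alphamonotone} exactly as you describe. Your factorization $f'-f=p_{k+1}(M-\alpha)(M-\alpha')$ is algebraically identical to the paper's form $p_{k+1}(MS_B-D)^2/\bigl(S_B(S_B-p_{k+1})\bigr)$, since $D=\alpha S_B$ and $M-\alpha'=(M-\alpha)S_B/(S_B-p_{k+1})$; your observation that the side conditions in (a) and (b) serve only to keep the new point feasible, not to drive the strict increase, is likewise consistent with how the paper uses them.
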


\begin{proof}
We prove the strict monotonicity of the objective function value. For convenience, denote 
\[
D = c - MS_A - mS_C. 
\]
Note that 
\[
f_{new} = (S_A + p_{k+1})M^2 + S_Cm^2 + \frac{(D - Mp_{k+1})^2}{S_B - p_{k+1}}, \quad f_{old} = S_AM^2 + S_Cm^2 + \frac{D^2}{S_B}.
\]
\begin{eqnarray*}
f_{new} - f_{old} &=& p_{k+1}M^2 + \frac{(D - Mp_{k+1})^2}{S_B - p_{k+1}} - \frac{D^2}{S_B}\\[5pt]
&=& \frac{p_{k+1} M^2 S_B(S_B - p_{k+1}) + S_B(D - Mp_{k+1})^2 - D^2(S_B - p_{k+1})}{S_B(S_B - p_{k+1})}\\[5pt]
&=& \frac{p_{k+1} M^2 S_B^2 - p_{k+1}^2 M^2 S_B + S_B D^2 - 2S_B D Mp_{k+1} + S_B M^2 p_{k+1}^2 - D^2S_B + D^2p_{k+1}}{S_B(S_B - p_{k+1})}\\[5pt]
&=& \frac{p_{k+1} M^2 S_B^2 - 2 S_B D M p_{k+1}  + D^2p_{k+1}}{S_B(S_B - p_{k+1})}\\[5pt]
&=& \frac{p_{k+1}(MS_B - D)^2}{S_B(S_B - p_{k+1})}\\
&>& 0.
\end{eqnarray*}
This proves part (a). Part (b) can be proved similarly.
\end{proof}

Based on \cref{lemma:alphamonotone}, \cref{lemma:infeasible} and \cref{lemma:objmonotone}, an $O(n)$ algorithm for optimal solution search can be designed. The proposed algorithm employs a bidirectional search strategy, initiating from both ends of an initial solution and progressing toward the center. During the search process, the objective function value exhibits a monotonically increasing trend according to \cref{lemma:objmonotone}. Furthermore, \cref{lemma:alphamonotone} guarantees that once a solution becomes infeasible, all subsequent solutions along that search direction are infeasible, thereby providing a natural termination criterion for this algorithm along this search direction. The complete procedure is presented in detail in \cref{alg:one_pass}.

\begin{algorithm}[!h]
\caption{One-pass search algorithm for \eqref{eq: mainproblem}}
	\label{alg:one_pass}
	\begin{algorithmic}[1]
		\STATE \textbf{Input}: $p = (p_1, p_2, \cdots, p_n)$ with $0 < p_i < 1$. A sorted reward $r = (r_1,r_2,\cdots,r_n)$. An upper bound $M$ and a lower bound $m$.
		\STATE \textbf{Output}: A optimal solution $\mathbf{z}^*$ to \eqref{eq: mainproblem} and the optimal objective value $f^*$.
        \STATE\textbf{Initialization}: $c = \sum_{i = 1}^n p_i r_i$, $k = 0$, $l = n$, $iter=0$.
        \STATE Define $C(0) = 0$ and compute the cumulative sums 
        $C(i) = \sum_{j = 1}^i p_j \quad 1\leq i \leq n$.
        \WHILE{$iter < n$}
        \STATE Compute
        $S_A = C(k), S_C = C(n) - C(l), S_B = C(l) - C(k)$.
        \IF{$S_B \le 0$}
        \STATE \textbf{Break.}
        \ENDIF
        \STATE Set $\alpha = (c-MS_A-mS_C)/S_B$. Compute objective $f = S_AM^2 + S_B\alpha^2 + S_Cm^2$.
        \STATE Set $f^*=f, k^*=k, l^*=l, flags=0$.
        \IF{$k<l$ and $S_B-p_{k+1}>0$}
        \STATE $\bar{\alpha} = \frac{c-(S_A+p_{k+1})M-S_cm}{S_B-p_{k+1}}$.
        \IF{$m \le \bar{\alpha} \le M$}
        \STATE $f_{new} = (S_A+p_{k+1})M^2 + (S_B-p_{k+1}) \bar{\alpha}^2 +S_C m^2$.
        \IF{$f_{new} > f^*$}
        \STATE $f^* = f_{new}, k^* = k+1, l^* = l, flags=1$.
        \ENDIF
        \ENDIF
        \ENDIF
        \IF{$l>k$ and $S_B-p_l>0$}
        \STATE $\bar{\alpha} = \frac{c-S_AM-(S_C+p_l)m}{S_B-p_l}$.
        \IF{$m \le \bar{\alpha} \le M$}
        \STATE $f_{new} = S_AM^2 + (S_B-p_l) \bar{\alpha}^2 + (S_C+p_l)m^2$.
        \IF{$f_{new} > f^*$}
        \STATE $f^* = f_{new}, k^* = k, l^* = l-1, flags=1$.
        \ENDIF
        \ENDIF
        \ENDIF
        \IF{$flags=1$}
        \STATE $k = k^*, l = l^*$.
        \STATE \textbf{Continue.}
        \ELSE
        \STATE \textbf{Break.}
        \ENDIF
        \STATE $iter = iter + 1 $.
	\ENDWHILE	
        \STATE Set $z_1^* = \cdots = z_{k^*}^* = M, z_{k^*+1}^* = \cdots = z_{l^*}^* = \alpha, z_{l^*+1}^* = \cdots = z_n^* = m$.
	\STATE \textbf{return} $\mathbf{z}^*=(z_1^*,z_2^*,\cdots,z_n^*)$ and $f^*$
	\end{algorithmic}
\end{algorithm}

\section{A GRPOVI Algorithm for Faster RLHF Training}
\label{sec:GRPOalg}

In this section, we present the GRPO with reward variance increase (GRPOVI) algorithm for faster RLHF training. We first introduce how to integrate the reward adjustment model into the GRPO training. For simplicity, let the policy model at time step $t$ be $\pi_{\theta(t)}$. For each prompt, the responses are generated from the current policy model $\pi_{\theta(t)}(\cdot | \mathbf{x})$. It follows from \cite[Theorem 4]{razin2025makes} that the efficiency of the RLHF training depends on the reward variance at time step $t = 0$. Thus, we apply \eqref{eq: mainproblem} to compute the adjusted reward with $p_i = \pi_{\theta(0)}(\mathbf{y}_i | \mathbf{x})$ using the initial policy model $\pi_{\theta(0)}(\cdot | \mathbf{x})$. The adjusted rewards are used to compute the advantages in GRPO. The following corollary guarantees the reward variance increases over the response space, which is a direct consequence of \cref{thm:varinc}. 

\begin{corollary}
\label{col:varinc}
    Let $\mathbf{x} \in \mathcal{D}$ be any given prompt. Assume that the responses $\{ \mathbf{y}_1, \mathbf{y}_2, \cdots, \mathbf{y}_n \}$ are generated from the policy model $\pi_{\theta(t)}(\cdot|\mathbf{x})$. If the reward $(\tilde{r}(\mathbf{x},\mathbf{y}_1), \tilde{r}(\mathbf{x},\mathbf{y}_2), \cdots, \tilde{r}(\mathbf{x},\mathbf{y}_n) )$ is a global optimal solution of problem \eqref{eq: mainproblem}, with $r_i = r(\mathbf{x}, \mathbf{y}_i)$ and $p_i = \pi_{\theta(0)}(\mathbf{y}_i | \mathbf{x}) ~ \forall i \in \{1,2,\cdots,n\}$, then the reward variance of the initial policy model $\pi_{\theta(0)}(\cdot | \mathbf{x})$ over the response space can be increased for the prompt $\mathbf{x} \in \mathcal{D}$.
\end{corollary}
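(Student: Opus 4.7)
The plan is to apply Theorem \ref{thm:varinc} directly, with the parameter $\theta$ in the theorem specialized to $\theta(0)$. The key observation is that Theorem \ref{thm:varinc} makes no assumption about how the response group $\mathcal{G} = \{\mathbf{y}_1, \ldots, \mathbf{y}_n\}$ was generated; it only requires that (i) $\mathcal{G}$ is a fixed collection of responses, (ii) the probabilities $p_i$ appearing in the reward adjustment model \eqref{eq: mainproblem} equal $\pi_\theta(\mathbf{y}_i \mid \mathbf{x})$ for the same policy $\pi_\theta$ whose reward variance we wish to analyze, and (iii) $(\tilde{r}(\mathbf{x},\mathbf{y}_1), \ldots, \tilde{r}(\mathbf{x},\mathbf{y}_n))$ is a global optimum of \eqref{eq: mainproblem}. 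Under these three conditions, the theorem guarantees that the adjusted reward $r'$ (agreeing with $r$ off $\mathcal{G}$ and with $\tilde{r}$ on $\mathcal{G}$) has the same expectation as $r$ but no smaller variance, both taken with respect to $\pi_\theta(\cdot \mid \mathbf{x})$.

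In our corollary the responses are drawn from $\pi_{\theta(t)}(\cdot \mid \mathbf{x})$, but this sampling mechanism only determines which responses make up $\mathcal{G}$; it plays no further role in the argument. The first step of the proof is therefore to freeze $\mathcal{G}$ as an arbitrary but fixed subset of $\mathcal{Y}$. The second step is to note that the hypothesis $p_i = \pi_{\theta(0)}(\mathbf{y}_i \mid \mathbf{x})$ matches exactly the setup of Theorem \ref{thm:varinc} with the choice $\theta := \theta(0)$, so that the equality constraint in \eqref{eq: mainproblem} reads $\sum_i \pi_{\theta(0)}(\mathbf{y}_i \mid \mathbf{x}) \tilde{r}(\mathbf{x}, \mathbf{y}_i) = \sum_i \pi_{\theta(0)}(\mathbf{y}_i \mid \mathbf{x}) r(\mathbf{x}, \mathbf{y}_i)$, which is precisely the expectation-preservation identity needed for $\pi_{\theta(0)}$.

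The third step is to invoke Theorem \ref{thm:varinc} with $\theta = \theta(0)$; this yields both
\[
\mathbb{E}_{\mathbf{y} \sim \pi_{\theta(0)}(\cdot \mid \mathbf{x})}[r'(\mathbf{x}, \mathbf{y})] = \mathbb{E}_{\mathbf{y} \sim \pi_{\theta(0)}(\cdot \mid \mathbf{x})}[r(\mathbf{x}, \mathbf{y})]
\]
and
\[
\mathrm{Var}_{\mathbf{y} \sim \pi_{\theta(0)}(\cdot \mid \mathbf{x})}[r'(\mathbf{x}, \mathbf{y})] \;\geq\; \mathrm{Var}_{\mathbf{y} \sim \pi_{\theta(0)}(\cdot \mid \mathbf{x})}[r(\mathbf{x}, \mathbf{y})],
\]
as required. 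There is essentially no obstacle here beyond carefully separating the roles of the two policies: $\pi_{\theta(t)}$ is used only as a response-generation mechanism and never enters the variance/expectation calculation, while $\pi_{\theta(0)}$ appears both as the weights $p_i$ in \eqref{eq: mainproblem} and as the distribution under which the variance is measured. Once this distinction is made explicit, the corollary is an immediate specialization of Theorem \ref{thm:varinc}, and no additional estimates are needed.
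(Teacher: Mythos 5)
Your proposal is correct and matches the paper's own treatment: the paper states \cref{col:varinc} is a direct consequence of \cref{thm:varinc}, and your specialization $\theta := \theta(0)$, together with the observation that $\pi_{\theta(t)}$ serves only to select the group $\mathcal{G}$ while $\pi_{\theta(0)}$ supplies both the weights $p_i$ and the measure for the variance, is exactly the intended argument.
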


The GRPOVI algorithm applies GRPO for RLHF training with adjusted rewards for the generated responses via solving \eqref{eq: mainproblem}. According to \cref{thm:varinc} and \cite[Theorem 4]{razin2025makes}, the GRPOVI algorithm achieves faster RLHF training compared to the original GRPO algorithm. The details of the GRPOVI algorithm are presented in \cref{alg:grpovi}.

\begin{algorithm}[!h]
\caption{GRPOVI: GRPO with reward variance increase}
	\label{alg:grpovi}
	\begin{algorithmic}[1]
		\STATE \textbf{Input}: reference model $\pi_{\mathrm{ref}}$, reward model $r$, dataset $\mathcal{D}$, group size $n$.
		\STATE \textbf{Output}: policy model $\pi_{\theta(T)}$.
        \STATE\textbf{Initialization}: $\pi_{\theta(0)} = \pi_{\mathrm{ref}}$.
	\FOR{step $t=0,1,\cdots, T$}
        \STATE Choose a batch $\mathcal{D}_b$ from $\mathcal{D}$.
        \STATE Generate $n$ responses $\{\mathbf{y}_i\}_{i=1}^n$ from $\pi_{\theta(t)}(\cdot|\mathbf{x})$ for each $\mathbf{x} \in \mathcal{D}_b$.
        \STATE Compute the probability $p_i = \pi_{\mathrm{ref}}( \mathbf{y}_i | \mathbf{x})$ based on reference model $\pi_{\mathrm{ref}}$ for each $i=1,2,\cdots,n$.
        \STATE Normalize the probabilities $p_i = \frac{p_i}{\sum_{i=1}^n p_i}$.
        \STATE Compute reward $r_i = r(\mathbf{x},\mathbf{y}_i)$ for each $i=1,2,\cdots,n$.
        \STATE Sort $\{p_i\}_{i=1}^n$ and $\{r_i\}_{i=1}^n$ such that $r_1\ge r_2 \ge \cdots \ge r_n$.
        \STATE Compute the adjusted rewards $\{\tilde{r}_i\}_{i=1}^n$ based on $\{p_i\}_{i=1}^n$ and $\{r_i\}_{i=1}^n$ (\cref{alg:extreme-points-search} or \cref{alg:one_pass}).
        \STATE Compute advantages in GRPO based on $\{\tilde{r}_i\}_{i=1}^n$.
        \STATE Update policy model $\pi_{\theta(t)}$ according to GRPO-based RLHF training method.
        \ENDFOR
	\STATE \textbf{return} policy model $\pi_{\theta(T)}$
	\end{algorithmic}
\end{algorithm}

Notably, benefiting from the fast \cref{alg:one_pass} for solving \eqref{eq: mainproblem}, the additional per-iteration training cost of the GRPOVI algorithm for RLHF training is marginal compared to the original GRPO algorithm. Furthermore, as demonstrated in \cref{sec:experiments}, GRPOVI achieves much higher rewards than the standard GRPO approach after the same number of training steps. Moreover, an important observation emerges from our reward adjustment process: while the original rewards $\{ r(\mathbf{x}, \mathbf{y}_i) \}_{i=1}^n$ may assume arbitrary values, the adjusted rewards $\{ \tilde{r} (\mathbf{x}, \mathbf{y}_i) \}_{i=1}^n$ exhibit at most three distinct values. This ternary reward structure effectively simulates a rule-based reward mechanism that categorizes responses into ``positive'', ``neutral'', or ``negative'' evaluations. This property indirectly explains the empirical effectiveness of rule-based reward in GRPO applications like DeepSeek-R1 \cite{shao2024deepseekmath}, as such discrete reward schemes inherently promote higher variance in the reward distribution.

\section{Experiments}
\label{sec:experiments}
This section encompasses two primary experiments.
First, we evaluate the effectiveness and computational efficiency of our proposed $O(n)$ algorithm (\cref{alg:one_pass}) for solving \eqref{eq: mainproblem} through a sequence of simulation studies. Subsequently, we conduct extensive experiments to demonstrate the superior performance of the proposed GRPOVI algorithm compared to the standard GRPO algorithm.

\subsection{Experiments on the efficiency of  \cref{alg:one_pass}}
\label{subsec:search_alg_exp}

To showcase the performance of the One-pass search algorithm for solving the optimization problem \eqref{eq: mainproblem}. We first randomly generate the probabilities $\{p_i\}_{i=1}^n$ and rewards $\{ r_i \}_{i=1}^n$ with different sizes. 
Then these sorted vectors are input for testing. The results are presented in \cref{tab:optalg}. The numerical experiments in \cref{tab:optalg} are run on an INTEL(R) XEON(R) GOLD 6548Y+ CPU.

\begin{table}[h]
\centering
\begin{tabular}{c|cccc}
\hline
\multirow{2}{*}{Size $n$} & \multicolumn{2}{c}{Enumeration search algorithm} & \multicolumn{2}{c}{One-pass search algorithm}  \\
\cline{2-5}
& Optimal value $f_1^*$ & Running time (s) & Optimal value $f_2^*$ & Running time (s) \\
\hline
10 & 0.3918 & 0.0022 & 0.3918 & 0.0015  \\
50 & 0.4260 & 0.0438 & 0.4260 & 0.0057  \\
100 & 0.5159 & 0.1707 & 0.5159 & 0.0112  \\
500 & 0.4970 & 4.2043 & 0.4970 & 0.0539  \\
1000 & 0.4938 & 16.9763 & 0.4938 & 0.1106  \\
5000 & 0.4919 & 408.9071 & 0.4919 & 0.5515  \\
10000 & 0.4933 & 1659.4376 & 0.4933 & 1.0561  \\
\hline
\end{tabular}
\caption{Comparison of enumeration search algorithm (\cref{alg:extreme-points-search}) and one-pass search algorithm (\cref{alg:one_pass}) for solving \eqref{eq: mainproblem}.}
\label{tab:optalg}
\end{table}

It is worth noting that the optimal value $f_1^*$ of \cref{alg:extreme-points-search} corresponds to the global solution since it enumerates all the extreme points. As shown in \cref{tab:optalg}, the One-pass search algorithm also achieves the global optimum. Furthermore, when the size of group $n$ is relatively small, the running times of both algorithms are nearly identical. Therefore, either algorithm can be integrated into the GRPOVI framework when the number of responses is small. However, as the group size increases, the One-pass algorithm exhibits a significant advantage in computational efficiency.

\subsection{Experiments on the Superior Performance of the GRPOVI Algorithm for RLHF Training}
In this subsection, we compare the performance of the proposed GRPOVI algorithm and the original GRPO algorithm for preference alignment of the LLMs. All the experiments in this subsection are run on an NVIDIA A100-SXM4-80GB GPU with CUDA 12.4. 

\subsubsection{Experiment Settings}

We summarize the experiment settings as follows.

\noindent \textbf{Reward models.} Following \cite{razin2025makes}, the reward models used in this paper for preference alignment are GRM-Gemma-2-2B \cite{yang2024regularizing} and GRM-Llama-3.2-3B \cite{yang2024regularizing}, both of which are fully pre-trained. In the original GRPO algorithm, the initial policy model $\pi_{\mathrm{ref}}$ is trained based on the reward model $r$. In contrast, the GRPOVI algorithm adjusts the rewards over each group to $\tilde{r}$ by solving \eqref{eq: mainproblem}. Inspired by the prior work \cite{tang2024understanding, razin2025makes, chen2024accuracy}, we use a high-quality reward model to evaluate the goodness of the trained policy model. Specifically, we adopt the ArmoRM reward model \cite{wang2024interpretable} as the ground truth since it achieves high scores on RewardBench \cite{lambert2024rewardbench}.

\noindent \textbf{Dataset.} We utilize the standard RLHF dataset, UltraFeedback \cite{cui2024ultrafeedback}, for our experiments. Similar to \cite{razin2025makes}, the dataset is split into three parts: the policy gradient training part, the test part and the remaining part. The remaining part is used for SFT the language model, with the prompts sampled from it and the preferred outputs selected by the ground truth reward model. The first two parts are applied for policy gradient training and testing after SFT phase.

\noindent \textbf{Initial policy model.} We use Pythia \cite{biderman2023pythia} as the initial policy model, fine-tuned via SFT on UltraFeedback. We then perform RLHF training using both the GRPOVI algorithm and the original GRPO algorithm, comparing their performance after an equivalent number of training steps.

\noindent \textbf{Reward normalization.} Similar to \cite{gao2023scaling, razin2025makes}, the reward models are normalized so that they will produce rewards on the same scale. Specifically, 500 prompts are sampled from the policy gradient training set, and 10 outputs are generated per prompt from the initial policy model. For each reward model (including the ground truth reward model), the mean and standard deviation are calculated across these 5000 outputs, thereby normalizing the rewards in subsequent experiments.

\noindent \textbf{Policy gradient.} For GRPO-based RLHF training, the group size is set to 8, meaning 8 responses are generated per prompt. The batch size is set to 32, corresponding to 4 prompts per batch from the training set. The learning rate is set to $1\times 10^{-7}$. The policy gradient method is conducted by passing over all the prompts from the training set once, and 8 checkpoints are recorded evenly during the training process. 

\subsubsection{Results} 

We employ the GRPOVI algorithm and the original GRPO algorithm to do preference alignments for the initial policy model with the two aforementioned reward models. To comprehensively compare the performance of these two algorithms throughout the training process, we save eight checkpoints for each algorithm. We evaluate each checkpoint by computing the average ground truth reward on both training and test sets, as depicted in \cref{fig:gemma_all} and \cref{fig:llama_all}, respectively. To mitigate the impact of randomness, we repeated the RLHF training process 4 times for each algorithm configuration.  

\begin{figure}[!ht]
    \centering
    \subfloat[Performance on the training set.]{\includegraphics[width=0.65\textwidth]{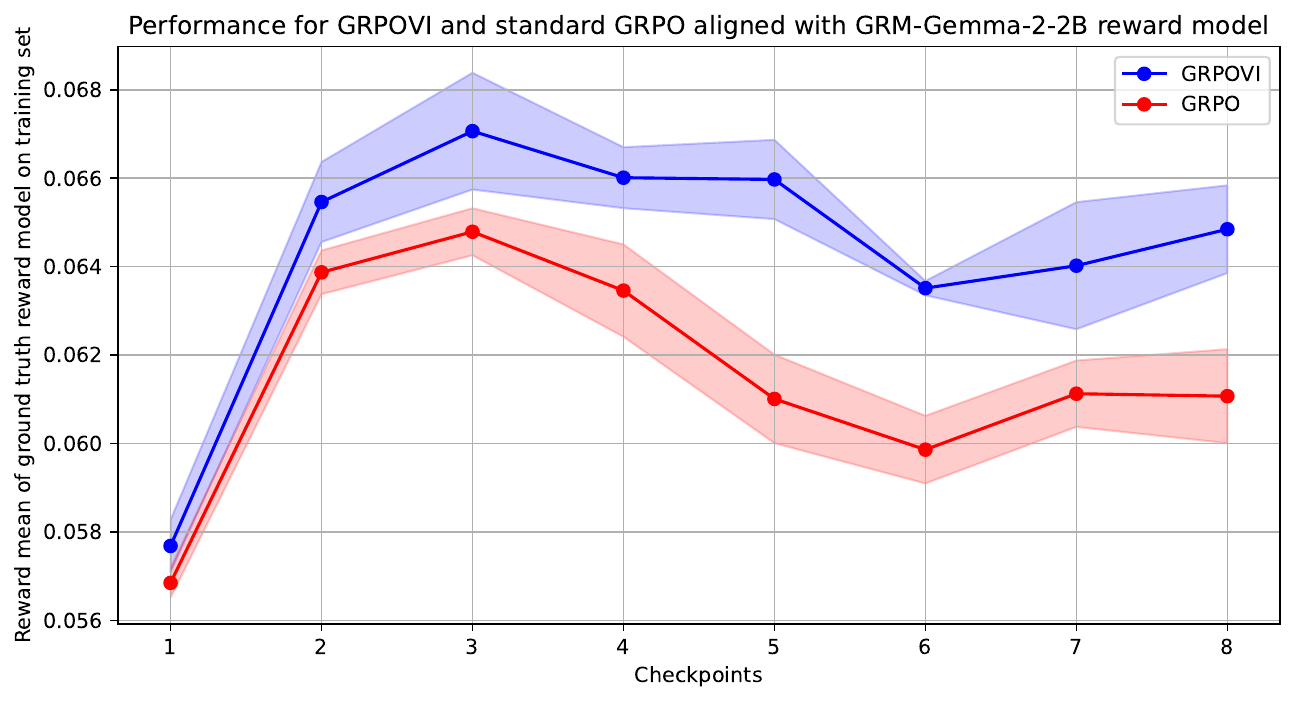}\label{fig:gemma_train}} \\
    % \vspace{0.3cm} % 
    \subfloat[Performance on the test set.]{\includegraphics[width=0.65\textwidth]{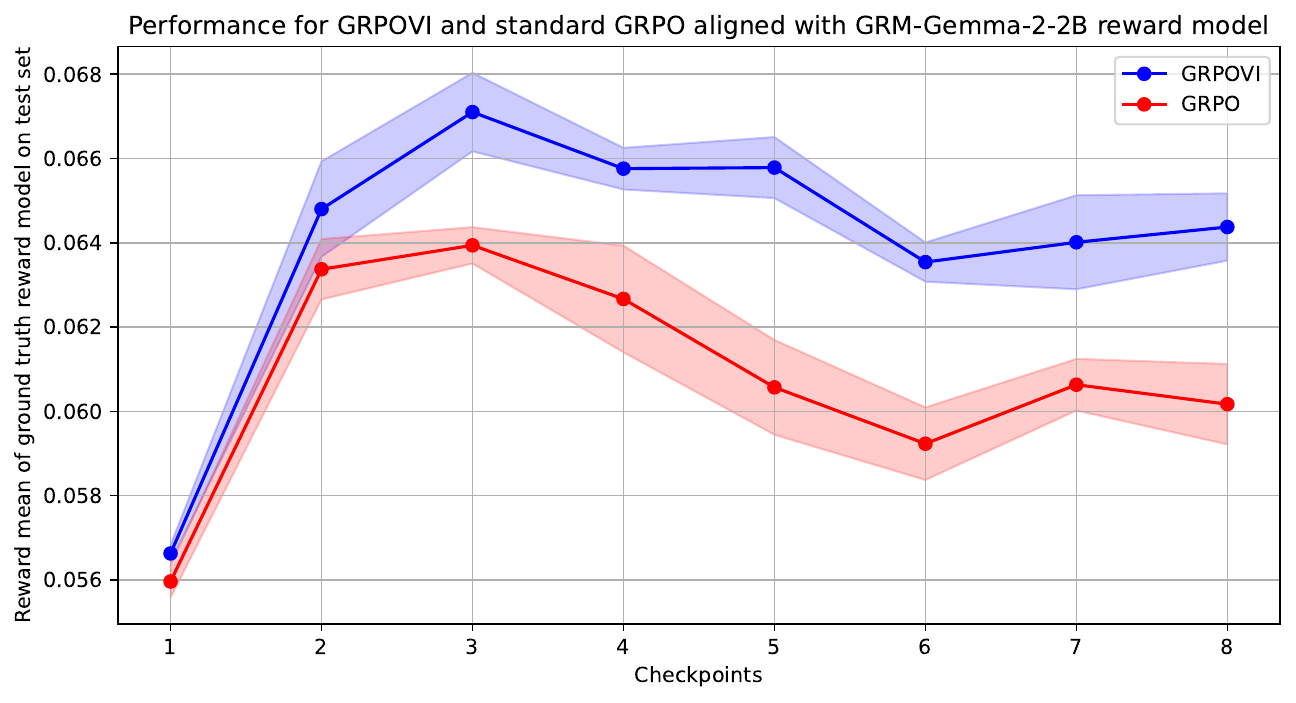}\label{fig:gemma_test}}
    \caption{Comparison of GRPOVI and original GRPO algorithms aligned with GRM-Gemma-2-2B reward model. Each dot represents the average of 4 independent training results at the same checkpoint. The shaded areas are constructed by adding and subtracting the standard deviation from the average.}
    \label{fig:gemma_all}
\end{figure}

\begin{figure}[!ht]
    \centering
    \subfloat[Performance on the training set.]{\includegraphics[width=0.65\textwidth]{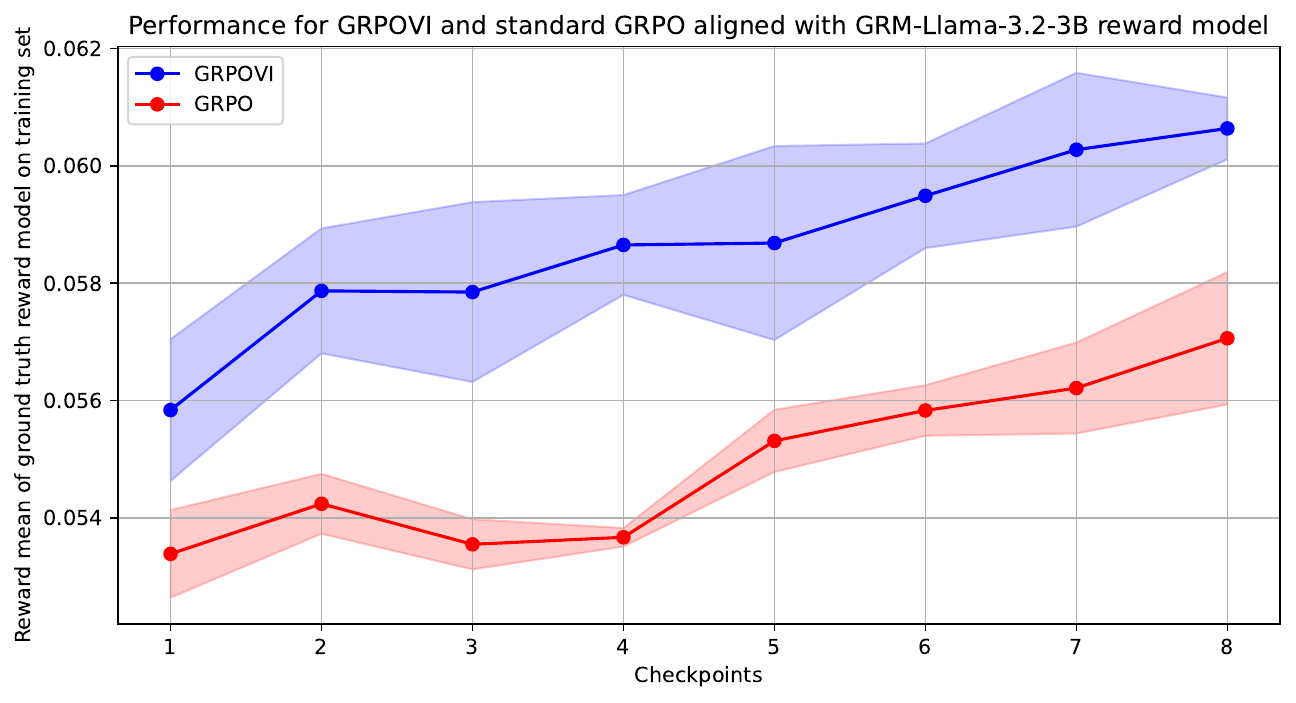}\label{fig:llama_train}} \\
    % \vspace{0.3cm} % 
    \subfloat[Performance on the test set.]{\includegraphics[width=0.65\textwidth]{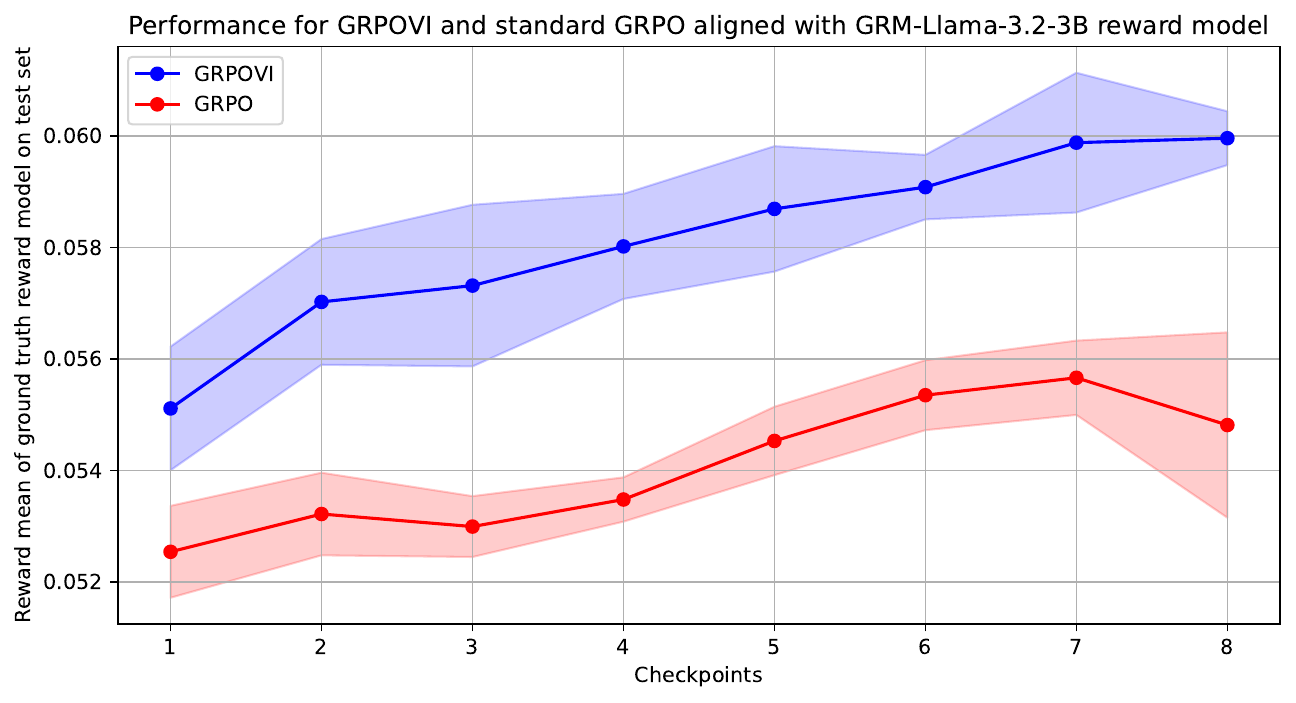}\label{fig:llama_test}}
    \caption{Comparison of GRPOVI and original GRPO algorithms aligned with GRM-Llama-3.2-3B reward model. Each dot represents the average of 4 independent training results at the same checkpoint. The shaded areas are constructed by adding and subtracting the standard deviation from the average.}
    \label{fig:llama_all}
\end{figure}

The experiment results demonstrate that the GRPOVI algorithm significantly outperforms the original GRPO after the same number of training steps. These findings confirm that increasing reward variance before policy gradient optimization can effectively accelerate the GRPO-based RLHF training. Moreover, compared to the original GRPO algorithm, GRPOVI merely integrates an additional $O(n\log n)$ search algorithm. The search algorithm has negligible computational overhead since $n$ is relatively small. Consequently, both GRPO and GRPOVI algorithms require similar per iteration training time.

\section{Conclusions}
\label{sec:conclusions}
In this paper, we propose a reward adjustment model, which leads to a novel GRPOVI algorithm for faster RLHF training. To address the computational challenges in solving the nonconvex reward adjustment model, we explicitly characterize the extreme points of the feasible set of the model. Based on this characterization, we designed a highly efficient $O(n\log n)$ algorithm for finding a global solution to the nonconvex reward adjustment model. As an interesting byproduct, we provide an indirect explanation for the empirical effectiveness of GRPO with rule-based reward for RLHF training, as demonstrated in DeepSeek-R1. The numerical results shown in this paper demonstrate the superior performance of the GRPOVI algorithm compared to the original GRPO algorithm in RLHF training. Our results also provide practical evidence to support the interesting theoretical findings in \cite{razin2025makes}. As a possible future research direction, we are interested in investigating whether the GRPOVI algorithm proposed in this paper contributes to a more robust RLHF training regarding the noise and distribution shifts of the training data.

% %Bibliography
% \bibliographystyle{unsrt}  
% \bibliography{references}  

\bibliography{GRPOVI_arxiv}{}
\bibliographystyle{siam}
\end{document}